\documentclass[lettersize,journal]{IEEEtran}
\usepackage{amsmath,amsfonts,amssymb}
\usepackage{algorithm}
\usepackage{algpseudocode}
\usepackage{array}
\usepackage[caption=false,font=normalsize]{subfig}
\usepackage{textcomp}
\usepackage{stfloats}
\usepackage{url}
\usepackage{verbatim}
\usepackage{graphicx}
\usepackage{cite}
\usepackage{hyperref}
\usepackage{bm}
\usepackage{booktabs}
\usepackage{tabularx}
\usepackage{makecell}
\usepackage{threeparttable}
\newcolumntype{Y}{>{\centering\arraybackslash}X}

\usepackage{amsthm}

\newtheorem{lemma}{Lemma}

\theoremstyle{remark}
\newtheorem{remark}{Remark}
\usepackage{xcolor}
\begin{document}

\title{Projection-Retraction MPPI: \\ Exact Constraint-Manifold Control for Manipulators}

\author{Seulchan Lee\textsuperscript{1}, Leesai Park\textsuperscript{1}, Minhyeong Kang\textsuperscript{1}, and~Sanghyun~Kim\textsuperscript{1,2,\dag}%
\thanks{\textsuperscript{\dag}Corresponding author: Sanghyun Kim.}%
\thanks{This work was supported in part by the Korea Planning \& Evaluation Institute of Industrial Technology (KEIT) and the Ministry of Trade, Industry \& Energy (MOTIE) of the Republic of Korea under Grant RS-2025-11082970; in part by the Research Program for Agriculture Science and Technology Development of the Rural Development Administration, Republic of Korea, under Project RS-2026-25509723; and in part by the Korea Basic Science Institute (National Research Facilities and Equipment Center) grant funded by the Ministry of Science and ICT under Grant RS-2025-00564593.}%
\thanks{\textsuperscript{1}Seulchan Lee, Leesai Park, Minhyeong Kang and Sanghyun Kim are with the School of Mechanical Engineering, Kyung Hee University, Yongin-si 17104, Republic of Korea (e-mail: lee081847@khu.ac.kr; leesai2000@khu.ac.kr; kingtyphoon@khu.ac.kr; kim87@khu.ac.kr).}%
\thanks{\textsuperscript{2}Sanghyun Kim is also with the Institutes of Convergence Technology, Suwon-si 16229, Republic of Korea.}}

\maketitle

\begin{abstract}
Model Predictive Path Integral (MPPI) control is widely used in manipulation
for its gradient-free, parallel handling of non-convex costs. Manipulation
tasks, however, often impose constraints that hold throughout the motion: a
closed kinematic chain that two grasping arms keep exactly, or joint limits
and obstacle clearances that are never crossed. MPPI handles such
constraints only through the cost, as soft penalties that hold approximately
and fail under a strong task cost. To address this, we propose
Projection-Retraction MPPI (PR-MPPI), which enforces the constraints inside
the sampled dynamics. At every rollout step, the sampled velocity is projected to satisfy both
constraint types: the equality restricts it to a subspace, and each
inequality to a half-space within that subspace, so inequality handling
never breaks the equality. This projection, however, satisfies the
constraints only to first order, and a finite step leaves a small drift off
the equality. Therefore, we retract the returned command back onto the
constraint to numerical tolerance and independent of task weighting.
We validate PR-MPPI on 14-DoF dual-arm systems. 
In simulation, the returned commands satisfy the
closed-chain equality to numerical tolerance through a joint-limit stress
test and randomized obstacle avoidance. On real hardware, the arms of a
Unitree H1-2 humanoid reactively avoid a moving obstacle. Code and
experiment videos are available at
\url{https://rcilab.github.io/prmppi}.
\end{abstract}
 
 
\section{Introduction}
\label{sec:intro}

\IEEEPARstart{M}{any} manipulation tasks must respect constraints throughout
Model Predictive Path Integral (MPPI) control~\cite{williams2016aggressive,
williams2017model} is widely used in manipulation. It optimizes the task by
sampling control sequences and weighting them by their rollout costs,
requiring no gradients, which suits the non-convex, non-smooth costs
induced by contact and clutter. The rollouts are also evaluated fast
through massive GPU parallelization, as demonstrated by frameworks such as
STORM~\cite{bhardwaj2022storm}.
Manipulation tasks, however, often impose constraints that hold throughout
the motion, not merely at the goal. A wiping tool stays on the work surface.
Two arms rigidly grasping an object form a closed kinematic chain whose
relative pose holds exactly, or internal wrenches build up and the grasp is
lost. All along the way, joint limits and obstacle clearances are never
crossed. MPPI handles constraints only through soft,
penalty-based cost evaluation, and therefore cannot enforce a hard
constraint.

To enforce constraints more strictly than a soft penalty, several lines of
work augment MPPI with dedicated machinery on the inequality side.
Chance-constrained and barrier-based formulations add probabilistic or
forward-invariant safety~\cite{yin2024chance}. In the same spirit,
discrete-time control-barrier filters such as
Shield-MPPI~\cite{yin2023shield} repair the optimized control through a
local barrier correction. A parallel line of work constrains the samples
themselves rather than the cost. PRIEST~\cite{rastgar2023priest} projects
them toward collision- and kinematic-feasible sets, and
MPPI-Tan~\cite{zhao2025mppitan} redirects them along obstacle-gradient
tangent spaces. CSC-MPPI~\cite{park2025cscmppi} pushes them out of obstacles
by primal-dual gradient steps and selects the update from a DBSCAN cluster
rather than a global average, while $\pi$-MPPI~\cite{andrejev2025pimppi}
projects the sampled inputs onto magnitude and smoothness bounds.
Whether by barrier repair or by sample
correction, however, these methods consider one-sided \emph{inequality}
margins only and offer no means of holding an equality.

Enforcing a kinematic equality along a motion has a long history outside
sampling-based MPC. Classical redundancy resolution holds task equalities
instantaneously by projecting joint velocities onto the constraint null
space~\cite{nakamura1986singularity, siciliano1991taskpriority,
chan1995weighted, dariush2010cclik}. As a reactive one-step law, however, it
performs no horizon optimization and offers no principled coupling to
inequality margins. Sampling-based motion planners on constraint manifolds
instead project or retract random configurations onto the
manifold~\cite{berenson2009cbirrt, jaillet2013manifolds,
kingston2019implicit}. This yields feasible paths offline, but not a
closed-loop policy that re-solves the problem at every control cycle.
These literatures thus supply the core operations, projection and
retraction, but neither embeds them in a sampling-based MPC loop.

Two recent controllers bring this machinery into MPPI, and are closest to
our setting. DQ-MPPI~\cite{zhu2026real} adopts the analytical route
directly, projecting the samples onto the null space of the chain
constraint. It handles the residual off-manifold drift only through
null-space feedback at the low level, rather than removing it to a
prescribed tolerance. The constraint is also confined to the dual-arm closed
chain. MC-MPPI~\cite{lee2026mcmppi} instead replaces the analytic
constraint with a feasible manifold learned by a variational autoencoder.
It samples within that manifold and corrects the selected command with a
single quadratic-program step, which restores feasibility only
approximately. Because the feasible set is a learned model, its reliability
also depends on the training distribution and model accuracy. Both methods,
moreover, enforce the equality only in the sampling stage and leave
inequalities to the cost. As a result, neither returns an equality-feasible
command to numerical tolerance while handling inequality constraints
together, and neither keeps an inequality correction from pushing the state
off the equality. Table~\ref{tab:related} summarizes this landscape.

\begin{figure*}[t] 
\centering 
\includegraphics[width=\textwidth]{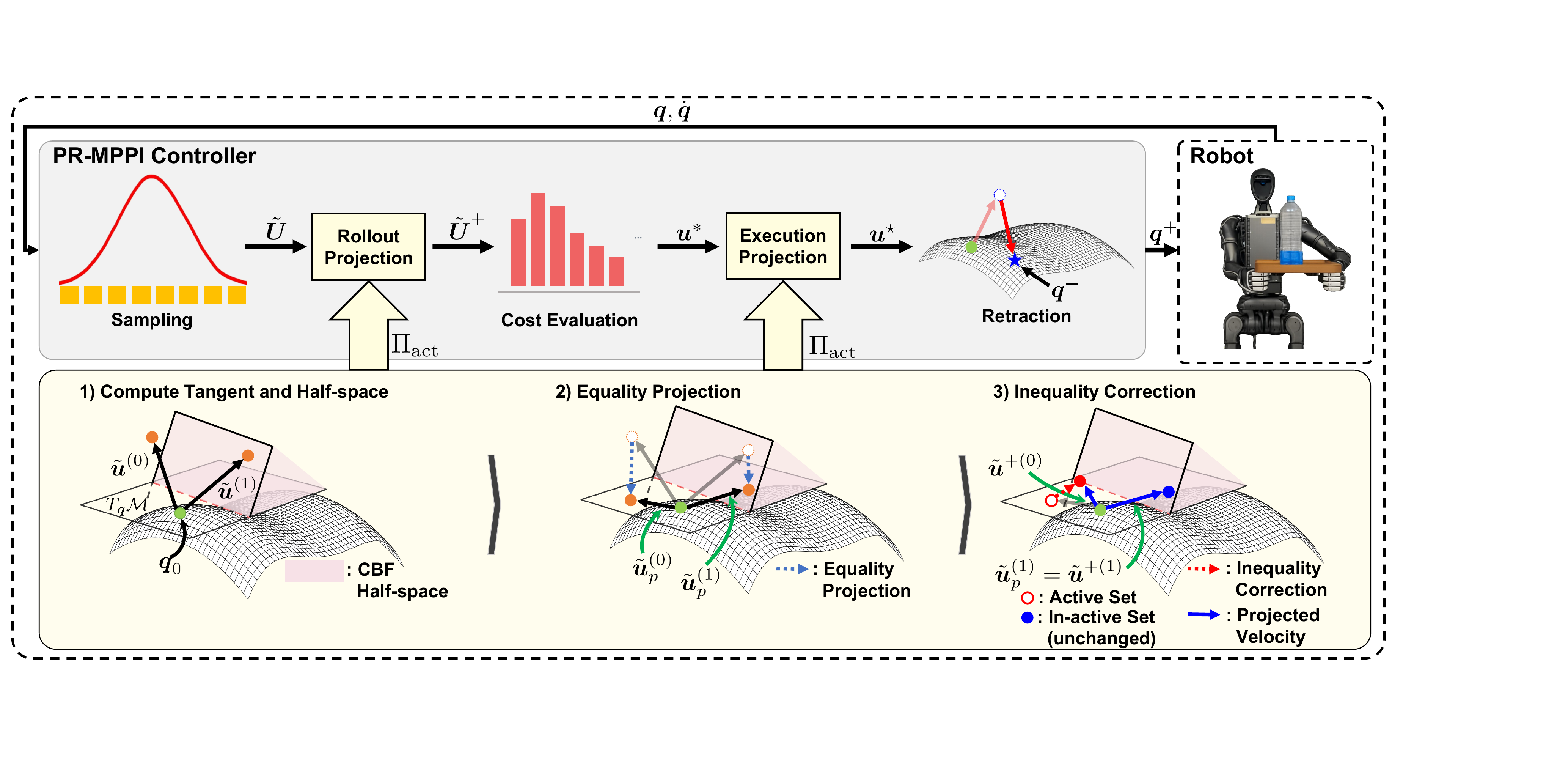} 
\caption{Overview of PR-MPPI. Sampled control sequences are rolled out
through the constraint-aware dynamics, where the active constraint
projector $\Pi_{\mathrm{act}}$ is applied at every step of every rollout.
The filtered inputs are averaged by the exponential-weighted update, the
averaged sequence is filtered once more along its own predicted states,
and the first step is retracted onto $\mathcal{M}$. The lower panels illustrate 
$\Pi_{\mathrm{act}}$ on two rollout samples in
three steps: (1) the tangent space of $\mathcal{M}$ and the CBF half-space
of an active margin, (2) projection of the sampled inputs onto the tangent
space, and (3) correction of the violating input within that tangent
space, leaving the admissible one unchanged.} 
\label{fig:overview} 
\end{figure*}

\begin{table}[t]
\centering
\footnotesize

\begin{threeparttable}
\caption{Constraint handling in sampling-based MPC.}
\label{tab:related}
\begin{tabularx}{\linewidth}{l Y Y Y Y}
\toprule
Method & Ineq.\tnote{a} & Eq.\tnote{a} & Exact\tnote{b} & Free\tnote{c} \\
\midrule
Shield-MPPI~\cite{yin2023shield}     & \checkmark & $\times$   & $\times$   & \checkmark \\
PRIEST~\cite{rastgar2023priest}      & \checkmark & $\times$   & $\times$   & \checkmark \\
$\pi$-MPPI~\cite{andrejev2025pimppi} & \checkmark & $\times$   & $\times$   & \checkmark \\
MPPI-Tan~\cite{zhao2025mppitan}      & \checkmark & $\times$   & $\times$   & \checkmark \\
CSC-MPPI~\cite{park2025cscmppi}      & \checkmark & $\times$   & $\times$   & \checkmark \\
MC-MPPI~\cite{lee2026mcmppi}      & $\times$   & \checkmark & $\times$   & $\times$   \\
DQ-MPPI~\cite{zhu2026real}        & $\times$   & \checkmark & $\times$   & \checkmark   \\
\textbf{PR-MPPI (ours)}              & \checkmark & \checkmark & \checkmark & \checkmark \\
\bottomrule
\end{tabularx}
\begin{tablenotes}
\scriptsize
\item[a] The constraint is enforced within the sampled rollouts or by a
dedicated repair layer, rather than only through a cost penalty. Eq.\ denotes a
configuration-space equality maintained throughout the motion.
\item[b] The returned command satisfies the equality to numerical tolerance.
\item[c] Constraint satisfaction does not require learned model.
\end{tablenotes}
\end{threeparttable}
\vspace{-0.5cm}
\end{table}

To bridge this gap, we propose Projection-Retraction MPPI (PR-MPPI), which
enforces equality and inequality constraints within the rollout itself
rather than through the cost. At every rollout step, each sampled velocity
passes through a single velocity-space projection: the equality confines it
to the tangent subspace of the constraint manifold, and every active
inequality confines it further to a half-space inside that subspace. The 
two are therefore not combined by weighting or priority but nested: an
inequality correction stays inside the equality subspace by construction,
for any activation pattern. This matters because a naive fix, such as
clamping a joint at its limit, can silently break the closed chain. The
projection, however, holds the constraints only to first order, and a
finite step leaves a small drift off the equality. A retraction step
therefore pulls the commanded state back onto the manifold to the
prescribed tolerance. Finally, the rollouts and the returned command pass through the same
projection, so the plan is optimized under the same first-order constraint
geometry that shapes the command.

The main contributions of this paper are as follows:


\begin{itemize}
\item We propose PR-MPPI, a constrained sampling-based MPC framework in
which a single velocity-space projection carries both constraint types
inside the sampled dynamics: each active inequality is corrected within the
equality tangent subspace, so inequality handling cannot break the equality
regardless of which margins are active. A retraction step then removes the
finite-step drift that the first-order projection leaves on the equality,
returning the command to satisfy the equality to numerical tolerance.

\item We establish a command-level equality property and characterize the
finite-step inequality error. Whenever the retraction terminates, the
returned command satisfies the equality constraint to the prescribed
numerical tolerance, independent of the MPPI weighting. Under the stated
assumptions, the inequality error introduced by the finite step and the
retraction is $O(\Delta t^2)$, small enough to be absorbed by the safety
margin.

\item We validate PR-MPPI on a $14$-DoF dual-arm Franka system in
simulation, where the retracted commands satisfy the closed-chain equality
to numerical tolerance while the measured trajectories retain the
joint-limit and obstacle margins under aggressive task commands, and
demonstrate reactive avoidance of a moving obstacle on the dual arms of a
Unitree H1-2 humanoid.
\end{itemize}

\section{Methodology}
\label{sec:methodology}

In this section, we present PR-MPPI, which incorporates equality and inequality
constraints into MPPI through projection and retraction. During each sampled
rollout, the perturbed velocity is projected onto the intersection of the
equality-constraint tangent space and the active inequality half-spaces. 
After the MPPI update, the averaged sequence is filtered once more along
its own predicted states, and the finite-step equality drift of the first
command is removed by retracting the integrated configuration onto the
constraint manifold. 
The overall framework is illustrated in Fig.~\ref{fig:overview}.

\subsection{Problem Definition}
\label{subsec:problem_definition}
The control input is the joint velocity,
$\boldsymbol{u}_t=\dot{\boldsymbol{q}}_t\in\mathbb{R}^{n}$. We consider the
discrete-time system
\begin{equation}
\boldsymbol{q}_{t+1} = f(\boldsymbol{q}_t, \tilde{\boldsymbol{u}}_t),
\label{eq:system_dynamics}
\end{equation}
where $\boldsymbol{q}_t \in \mathbb{R}^n$ is the joint configuration and
$\tilde{\boldsymbol{u}}_t\in\mathbb{R}^n$ is the stochastic control input. Our objective is to find an optimal nominal control sequence $\boldsymbol{U}=[\boldsymbol{u}_0,\dots,\boldsymbol{u}_{T-1}]$ over a prediction horizon $T$ that minimizes the expected trajectory cost while satisfying both hard equality and inequality constraints under stochastic perturbations.
The optimization problem is formulated as follows:
\begin{equation}
\begin{aligned}
    \min_{\boldsymbol{U}} \; & J = \mathbb{E}\left[ \phi(\boldsymbol{q}_{T}) + \sum_{t=0}^{T-1} \left( l(\boldsymbol{q}_t) + \frac{1}{2}\boldsymbol{u}_t^T \boldsymbol{R}\boldsymbol{u}_t \right) \right], \\
    \text{s.t.} \quad & \boldsymbol{c}(\boldsymbol{q}_t) = \boldsymbol{0}, \quad \boldsymbol{h}(\boldsymbol{q}_t) \geq \boldsymbol{0}, \\
    & \tilde{\boldsymbol{u}}_t = \boldsymbol{u}_t + \delta\boldsymbol{u}_t, \quad \delta\boldsymbol{u}_t \sim \mathcal{N}(\boldsymbol{0}, \boldsymbol{\Sigma}_u), \\
    & \boldsymbol{q}_0 = \boldsymbol{q}_{\mathrm{init}}, \quad
    \boldsymbol{q}_{t+1}=f(\boldsymbol{q}_t,\tilde{\boldsymbol{u}}_t),
\end{aligned}
\label{eq:problem_formulation}
\end{equation}
where $\boldsymbol{c}(\boldsymbol{q}_t) \in \mathbb{R}^{n_e}$ and $\boldsymbol{h}(\boldsymbol{q}_t) \in \mathbb{R}^{n_i}$ denote the equality and inequality constraints imposed on the joint space, respectively.
The matrix $\boldsymbol{R} \in \mathbb{R}^{n \times n}$ is a positive-definite
weight matrix penalizing the nominal control effort, and
$\boldsymbol{\Sigma}_u\in\mathbb{R}^{n\times n}$ is the exploration covariance.

\subsection{Constraint-Aware Sampled Dynamics}
\label{subsec:constraint_aware_sampled_dynamics}
The constrained problem in \eqref{eq:problem_formulation} requires the
sampled rollouts to respect both equality constraints and inequality
margins. A direct penalty-based treatment is undesirable because a
sufficiently large task cost can still make sampled trajectories violate
the hard constraints. Instead of treating these constraints only through
the cost, PR-MPPI projects each sampled velocity in velocity space before it
enters the dynamics. The trajectory costs are then evaluated on near-feasible motions, so the
MPPI average is formed over rollouts that pass through the same projection
as the final control input. We first express both
constraint types as conditions on the velocity and then define the filter
and the resulting sampled dynamics.

The equality constraint defines the manifold
\begin{equation}
    \mathcal{M}
    =
    \left\{
    \boldsymbol{q}
    :
    \boldsymbol{c}(\boldsymbol{q})=\boldsymbol{0}
    \right\},
    \qquad
    \boldsymbol{J}_c(\boldsymbol{q})
    =
    \frac{\partial \boldsymbol{c}}{\partial \boldsymbol{q}} \in \mathbb{R}^{n_e \times n},
    \label{eq:manifold_def}
\end{equation}
and, to first order, a velocity control input $\boldsymbol{u}$ preserves the constraint
if and only if
$\boldsymbol{J}_c(\boldsymbol{q})\boldsymbol{u}=\boldsymbol{0}$. For the inequalities, the desired safety buffer
$\boldsymbol{h}_{\mathrm{safe}}\in\mathbb{R}^{n_i}$, elementwise positive,
is embedded directly into the inequality function, and the shifted margins
are kept nonnegative to first order by the CBF condition:
\begin{equation}
    \bar{\boldsymbol{h}}(\boldsymbol{q})
    =
    \boldsymbol{h}(\boldsymbol{q})
    -
    \boldsymbol{h}_{\mathrm{safe}},
    \qquad
    \boldsymbol{J}_h(\boldsymbol{q})\,
    \boldsymbol{u}
    \geq
    -\boldsymbol{\Gamma}\,
    \bar{\boldsymbol{h}}(\boldsymbol{q}),
    \label{eq:shifted_margin_cbf}
\end{equation}
where
$\boldsymbol{J}_h(\boldsymbol{q})
=\partial\bar{\boldsymbol{h}}/\partial\boldsymbol{q}
\in\mathbb{R}^{n_i\times n}$ and
$\boldsymbol{\Gamma}=\mathrm{diag}(\gamma_1,\dots,\gamma_{n_i})$ with
$\gamma_i>0$. The shifted guard condition is
$\bar{\boldsymbol{h}}(\boldsymbol{q})\geq\boldsymbol{0}$, and each row of
the CBF condition bounds how fast the velocity may approach the
corresponding boundary.
Both conditions are linear in the input. The active constraint projector
$\Pi_{\mathrm{act}}$ updates the sampled input to its projection onto their
intersection:
\begin{equation}
\begin{aligned}
    \tilde{\boldsymbol{u}}^{+}_{t}
    =
    \Pi_{\mathrm{act}}
    \left(
    \boldsymbol{q}_t,
    \tilde{\boldsymbol{u}}_t
    \right)
    =
    \arg\min_{\boldsymbol{u}}
    \;&
    \tfrac{1}{2}
    \left\|
    \boldsymbol{u}-\tilde{\boldsymbol{u}}_t
    \right\|^2
    \\
    \mathrm{s.t.}\quad
    &
    \boldsymbol{J}_c(\boldsymbol{q}_t)\boldsymbol{u}=\boldsymbol{0},
    \\
    &
    \boldsymbol{J}_h(\boldsymbol{q}_t)\boldsymbol{u}
    \geq
    -\boldsymbol{\Gamma}\,
    \bar{\boldsymbol{h}}(\boldsymbol{q}_t).
\end{aligned}
    \label{eq:active_cbf_projection}
\end{equation}
The system dynamics in \eqref{eq:system_dynamics} then take the
constraint-aware form
\begin{equation}
\boldsymbol{q}_{t+1}
=
f_{\mathcal{M}}
\left(
\boldsymbol{q}_t,
\tilde{\boldsymbol{u}}_t
\right)
=
\boldsymbol{q}_{t}
+
\tilde{\boldsymbol{u}}^{+}_{t}\,\Delta t,
\label{eq:constraint_aware_dynamics}
\end{equation}
and every sampled rollout is generated by
\eqref{eq:constraint_aware_dynamics}. The rest of this subsection describes
how \eqref{eq:active_cbf_projection} is computed inside the rollouts, in
two stages: an equality tangent-space projection and an active inequality
correction.

The first stage projects the sampled input onto the tangent space of
$\mathcal{M}$ through the null-space projector:
\begin{equation}
    \tilde{\boldsymbol{u}}_{p,t}
    =
    \boldsymbol{N}(\boldsymbol{q}_t)\,
    \tilde{\boldsymbol{u}}_t,
    \qquad
    \boldsymbol{N}(\boldsymbol{q})
    =
    \boldsymbol{I}
    -
    \boldsymbol{J}_c^{\dagger}(\boldsymbol{q})
    \boldsymbol{J}_c(\boldsymbol{q}),
    \label{eq:equality_projected_input}
\end{equation}
where $\boldsymbol{N}\in\mathbb{R}^{n\times n}$ is the null-space projector
and $\boldsymbol{J}_c^{\dagger}$ the pseudoinverse of $\boldsymbol{J}_c$,
so that
$\boldsymbol{J}_c(\boldsymbol{q}_t)\tilde{\boldsymbol{u}}_{p,t}=\boldsymbol{0}$.
Since the feasible set of \eqref{eq:active_cbf_projection} lies in the
tangent space, the distances to $\tilde{\boldsymbol{u}}_t$ and to
$\tilde{\boldsymbol{u}}_{p,t}$ differ there only by the constant
$\|(\boldsymbol{I}-\boldsymbol{N})\tilde{\boldsymbol{u}}_t\|^2$. 
The second stage enforces the CBF half-spaces within this tangent space.
Since each sampled input violates only a few half-spaces, if any, solving
\eqref{eq:active_cbf_projection} with all $n_i$ half-spaces at every step
of every rollout is unnecessary. The projection is therefore solved over a
small active set that is grown only when needed from the initial members
selected by the CBF residual of each margin at the equality-projected
input:
\begin{equation}
    r_i
    \left(
    \boldsymbol{q}_t,
    \boldsymbol{u}
    \right)
    =
    -\gamma_i
    \bar h_i(\boldsymbol{q}_t)
    -
    \nabla \bar h_i(\boldsymbol{q}_t)^{T}
    \boldsymbol{u},
    \label{eq:cbf_residual}
\end{equation}
where $\nabla\bar h_i(\boldsymbol{q})^{T}$ is the $i$-th row of
$\boldsymbol{J}_h(\boldsymbol{q})$, and
$r_i(\boldsymbol{q}_t,\boldsymbol{u})\le0$ recovers the $i$-th row of the
CBF condition in \eqref{eq:shifted_margin_cbf}. A positive residual means
the input is pushing the rollout too fast toward the $i$-th margin. All
violating margins and margins within the activation band form the initial
active set,
\begin{equation}
    \mathcal{A}^{(0)}_t
    =
    \left\{
    i:
    r_i(\boldsymbol{q}_t,\tilde{\boldsymbol{u}}_{p,t})>0
    \;\;\text{or}\;\;
    \bar h_i(\boldsymbol{q}_t)<h_{\mathrm{band}}
    \right\}.
    \label{eq:active_margin_selection}
\end{equation}
Margins near the boundary are included preemptively through the band,
which covers half-spaces that the correction itself could push into
violation. If $\mathcal{A}^{(0)}_t=\emptyset$, the equality-projected
input already satisfies every half-space, so
$\tilde{\boldsymbol{u}}^{+}_{t}=\tilde{\boldsymbol{u}}_{p,t}$. Otherwise,
at iteration $\ell$, \eqref{eq:active_cbf_projection} is solved with only
the half-spaces in $\mathcal{A}^{(\ell)}_t$, and by the KKT conditions
its solution can be written as
\begin{equation}
    \boldsymbol{u}^{(\ell)}_{t}
    =
    \tilde{\boldsymbol{u}}_{p,t}
    +
    \sum_{i\in\mathcal{A}^{(\ell)}_t}
    \mu_{i,t}\,
    \boldsymbol{N}(\boldsymbol{q}_t)
    \nabla\bar h_i(\boldsymbol{q}_t),
    \quad
    \mu_{i,t}\ge0,
    \label{eq:ineq_correction}
\end{equation}
where $\mu_{i,t}$ are the dual variables associated with the selected CBF
half-spaces. By complementary slackness, a band-selected half-space that
remains nonbinding at the solution has $\mu_{i,t}=0$. The residuals of the
omitted half-spaces are then checked at $\boldsymbol{u}^{(\ell)}_{t}$. If
any is violated, the most violated one is added,
\begin{equation}
    \mathcal{A}^{(\ell+1)}_t
    =
    \mathcal{A}^{(\ell)}_t
    \cup
    \Bigl\{
    \arg\max_{i\notin\mathcal{A}^{(\ell)}_t}
    r_i(\boldsymbol{q}_t,\boldsymbol{u}^{(\ell)}_{t})
    \Bigr\},
    \label{eq:active_set_update}
\end{equation}
and the restricted solve is repeated. The iteration terminates when every
omitted residual satisfies
$r_i(\boldsymbol{q}_t,\boldsymbol{u}^{(\ell)}_{t})\le0$, and the final
iterate is taken as $\tilde{\boldsymbol{u}}^{+}_{t}$. At termination the
iterate satisfies the selected half-spaces by construction and the omitted
ones by the check, so it is the minimizer of
\eqref{eq:active_cbf_projection} with all $n_i$ half-spaces. Since every
correction term carries $\boldsymbol{N}$ as a left factor and
$\boldsymbol{J}_c\boldsymbol{N}=\boldsymbol{0}$, applying
$\boldsymbol{J}_c$ to \eqref{eq:ineq_correction} gives
$\boldsymbol{J}_c\tilde{\boldsymbol{u}}^{+}_{t}=\boldsymbol{J}_c\tilde{\boldsymbol{u}}_{p,t}=\boldsymbol{0}$:
the filtered input preserves the equality to first order.

Each rollout generated by \eqref{eq:constraint_aware_dynamics} is assigned
the trajectory cost of \eqref{eq:problem_formulation}, yielding
$\{S^{(k)}\}_{k=1}^{K}$. Let
$\tilde{\boldsymbol{U}}^{+(k)}
=[\tilde{\boldsymbol{u}}^{+(k)}_0,\dots,\tilde{\boldsymbol{u}}^{+(k)}_{T-1}]$
denote the filtered input sequence of the $k$-th rollout. The nominal
sequence is then updated by exponential-weighted averaging of the filtered
sequences:
\begin{equation}
    \boldsymbol{U}^{*}
    =
    \sum_{k=1}^{K} w_k\,
    \tilde{\boldsymbol{U}}^{+(k)},
    \qquad
    w_k
    \propto
    \exp\!\left(-\tfrac{1}{\lambda}S^{(k)}\right),
    \label{eq:mppi_update}
\end{equation}
where $\lambda>0$ is the temperature parameter and the weights are
normalized to sum to one.

\subsection{Execution-Layer Projection and Retraction}
\label{subsec:exec_projection_retraction}
The MPPI update \eqref{eq:mppi_update} returns
$\boldsymbol{U}^{*}=[\boldsymbol{u}^{*}_0,\dots,\boldsymbol{u}^{*}_{T-1}]$
by exponential-weighted averaging of the projected inputs. Each sample,
however, was projected at the states of its own rollout, so the averaged
inputs carry no constraint guarantee along the trajectory that
$\boldsymbol{U}^{*}$ itself generates. The averaged sequence therefore
requires filtering before actuation, and the system dynamics
\eqref{eq:constraint_aware_dynamics} serve this role. Starting from the
measured state $\boldsymbol{q}_0$, the sequence is rolled forward, and each
input is filtered into the final control input for $t=0,\dots,T-1$:
\begin{equation}
    \boldsymbol{u}^{\star}_t
    =
    \Pi_{\mathrm{act}}
    \left(
    \boldsymbol{q}_t,
    \boldsymbol{u}^{*}_t
    \right).
    \label{eq:exec_projection}
\end{equation}
This execution filtering corrects each input on the state it will actually
visit, with the same projector as in the rollouts. The initial execution
active set follows the same rule,
\begin{equation}
    \mathcal{A}^{(0)}_t
    =
    \bigl\{
    i
    :
    r_i(\boldsymbol{q}_t,\boldsymbol{N}\boldsymbol{u}^{*}_t)>0
    \ \text{or}\ 
    \bar h_i(\boldsymbol{q}_t)<h_{\mathrm{band}}
    \bigr\},
    \label{eq:exec_active_set}
\end{equation}
so simultaneously critical margins are enforced jointly, and the
active-set iteration \eqref{eq:active_set_update} refines this set until
every omitted half-space is satisfied.

Although \eqref{eq:exec_projection} enforces
$\boldsymbol{J}_c(\boldsymbol{q}_0)\boldsymbol{u}^{\star}_0=\boldsymbol{0}$,
this is a first-order condition on the commanded input. Starting from
$\boldsymbol{q}_0\in\mathcal{M}$, the finite step therefore leaves
$\boldsymbol{c}(\boldsymbol{q}_0+\Delta t\,\boldsymbol{u}^{\star}_0)
=O(\Delta t^{2})$. We remove this finite-step drift by initializing
$\boldsymbol{q}=\boldsymbol{q}_0+\Delta t\,\boldsymbol{u}^{\star}_0$ and
applying the Gauss--Newton iteration
\begin{equation}
    \boldsymbol{q}
    \leftarrow
    \boldsymbol{q}
    -
    \boldsymbol{J}_c^{\dagger}(\boldsymbol{q})\,
    \boldsymbol{c}(\boldsymbol{q})
    \qquad
    \text{until}
    \quad
    \left\|\boldsymbol{c}(\boldsymbol{q})\right\| < \varepsilon_{\mathrm{tol}}.
    \label{eq:exec_retraction}
\end{equation}
The iteration defines the retraction map $\mathcal{R}_{\mathcal{M}}$,
yielding the final command
$\boldsymbol{q}^{+}=\mathcal{R}_{\mathcal{M}}
(\boldsymbol{q}_0+\Delta t\,\boldsymbol{u}^{\star}_0)$.

\begin{algorithm}[t]
\caption{Projection-Retraction MPPI Framework}
\label{alg:pr_mppi}
\begin{algorithmic}[1]
\Require
\State \hspace{1em} $\boldsymbol{q}_0$: Current measured configuration
\State \hspace{1em} $\boldsymbol{U}=[\boldsymbol{u}_0,\dots,\boldsymbol{u}_{T-1}]$: Nominal control sequence
\State \hspace{1em} $\boldsymbol{\Sigma}_u$: Exploration covariance
\State \hspace{1em} $K,T$: Number of samples and horizon
\State \hspace{1em} $\boldsymbol{c},\bar{\boldsymbol{h}}$: Equality and shifted inequality constraints
\For{$k=1,\ldots,K$} \Comment{Parallel sampling}
    \State $\delta\boldsymbol{u}_{0:T-1}^{(k)}\sim\mathcal{N}(\boldsymbol{0},\boldsymbol{\Sigma}_u)$
    \State $\boldsymbol{q}^{(k)}_0\leftarrow\boldsymbol{q}_0$
    \For{$t=0,\ldots,T-1$}
        \State $\tilde{\boldsymbol{u}}_t^{(k)}\leftarrow\boldsymbol{u}_t+\delta\boldsymbol{u}_t^{(k)}$
        \State $\tilde{\boldsymbol{u}}_{t}^{+(k)}\leftarrow\Pi_{\mathrm{act}}(\boldsymbol{q}^{(k)}_t,\tilde{\boldsymbol{u}}_t^{(k)})$ \Comment{Eq.~\eqref{eq:active_cbf_projection}}
        \State $\boldsymbol{q}^{(k)}_{t+1}\leftarrow\boldsymbol{q}^{(k)}_t+\tilde{\boldsymbol{u}}_{t}^{+(k)} \Delta t$ \Comment{Eq.~\eqref{eq:constraint_aware_dynamics}}
    \EndFor
    \State Evaluate trajectory cost $S^{(k)}$
\EndFor
\State Compute exponential weights $\{w^{(k)}\}_{k=1}^{K}$ via \eqref{eq:mppi_update}
\State $\boldsymbol{U}^{*}\leftarrow\sum_{k=1}^{K}w^{(k)}\tilde{\boldsymbol{U}}^{+(k)}$ \Comment{MPPI update}
\For{$t=0,\ldots,T-1$}
    \State $\boldsymbol{u}^{\star}_t\leftarrow\Pi_{\mathrm{act}}(\boldsymbol{q}_t,\boldsymbol{u}^{*}_t)$ \Comment{Execution filtering, Eq.~\eqref{eq:exec_projection}}
    \State $\boldsymbol{q}_{t+1}\leftarrow\boldsymbol{q}_t+\boldsymbol{u}^{\star}_t\,\Delta t$
\EndFor
\State $\boldsymbol{q}^{+}\leftarrow\mathcal{R}_{\mathcal{M}}(\boldsymbol{q}_0+\Delta t\,\boldsymbol{u}^{\star}_0)$ \Comment{Retraction}
\State $\boldsymbol{U}\leftarrow[\boldsymbol{u}^{\star}_{1},\ldots,\boldsymbol{u}^{\star}_{T-1},\boldsymbol{0}]$ \Comment{Warm start}
\State \Return $\boldsymbol{q}^{+}$
\end{algorithmic}
\end{algorithm}

\begin{remark}[Retraction exactness]
\label{rem:retraction_exactness}
The stopping rule of \eqref{eq:exec_retraction} makes the commanded
tolerance a design property rather than a theorem: whenever the iteration
terminates, the retracted command satisfies
$\|\boldsymbol{c}(\boldsymbol{q}^{+})\|<\varepsilon_{\mathrm{tol}}$,
independently of the task cost, the MPPI temperature and covariance, and
the inequality-handling parameters. 
\end{remark}

\begin{lemma}[Finite-step margin bound]
\label{lem:finite_step_margin_bound}
Let $\boldsymbol{c}$ and each $\bar h_i$ be twice continuously
differentiable with bounded second derivatives on the operating set, on
which $\boldsymbol{J}_c$ has constant row rank and
$\|\boldsymbol{J}_c^{\dagger}\|$ is bounded. Suppose that, at every
control cycle, the retraction \eqref{eq:exec_retraction} terminates at its
tolerance, so that the executed configuration satisfies
$\|\boldsymbol{c}(\boldsymbol{q}_t)\|<\varepsilon_{\mathrm{tol}}$ with
$\varepsilon_{\mathrm{tol}}=O(\Delta t^{2})$. Here $\boldsymbol{q}_t$
denotes the configuration at cycle $t$ and $\boldsymbol{u}^{\star}_t$ the
actuated input of that cycle. If \eqref{eq:active_cbf_projection} is
feasible at $\boldsymbol{q}_t$ at every cycle,
$\|\boldsymbol{u}^{\star}_t\|\leq\bar u$,
$\|\nabla\bar h_i\|\leq\bar L$, and $\gamma_i\Delta t\leq1$, then there
exist constants $\kappa_i\geq0$, independent of $\Delta t$, such that
\begin{equation}
\bar h_i(\boldsymbol{q}_{t+1})
\geq
(1-\gamma_i\Delta t)\bar h_i(\boldsymbol{q}_t)
-\kappa_i\Delta t^2 ,
\label{eq:finite_step_margin_bound}
\end{equation}
so $\bar h_i(\boldsymbol{q}_t)\geq0$ implies
$\bar h_i(\boldsymbol{q}_{t+1})\geq-\kappa_i\Delta t^2$.
\end{lemma}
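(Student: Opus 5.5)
The argument splits the executed step into a Euler step followed by a retraction correction:
\[
\boldsymbol{q}_{t+1}=\boldsymbol{q}_t+\Delta t\,\boldsymbol{u}^{\star}_t+\boldsymbol{\delta}_t ,
\]
where $\boldsymbol{\delta}_t$ is the total displacement produced by the Gauss--Newton iteration \eqref{eq:exec_retraction}. We bound the margin change along each piece separately. For the Euler piece we use a second-order Taylor expansion of $\bar h_i$ about $\boldsymbol{q}_t$. With $M_h$ a bound on $\|\nabla^2\bar h_i\|$ over the operating set, this gives
\[
\bar h_i(\boldsymbol{q}_t+\Delta t\,\boldsymbol{u}^{\star}_t)\ge \bar h_i(\boldsymbol{q}_t)+\Delta t\,\nabla\bar h_i(\boldsymbol{q}_t)^T\boldsymbol{u}^{\star}_t-\tfrac12 M_h\bar u^2\Delta t^2 .
\]
Since \eqref{eq:active_cbf_projection} is feasible at $\boldsymbol{q}_t$, and by the remark after \eqref{eq:active_set_update} the active-set iteration returns its minimizer, $\boldsymbol{u}^{\star}_t$ satisfies the $i$-th CBF row: $\nabla\bar h_i^T\boldsymbol{u}^{\star}_t\ge-\gamma_i\bar h_i(\boldsymbol{q}_t)$. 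Substituting yields $(1-\gamma_i\Delta t)\bar h_i(\boldsymbol{q}_t)-\tfrac12M_h\bar u^2\Delta t^2$.

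For the retraction piece, the mean value theorem gives $|\bar h_i(\boldsymbol{q}_{t+1})-\bar h_i(\boldsymbol{q}_t+\Delta t\boldsymbol{u}^{\star}_t)|\le\bar L\|\boldsymbol{\delta}_t\|$. It therefore suffices to show $\|\boldsymbol{\delta}_t\|\le C\Delta t^2$ with $C$ independent of $\Delta t$.

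First bound the constraint residual at the start of the retraction. Taylor-expanding $\boldsymbol{c}$ about $\boldsymbol{q}_t$, and using $\boldsymbol{J}_c(\boldsymbol{q}_t)\boldsymbol{u}^{\star}_t=\boldsymbol{0}$ (the first-order equality preservation shown after \eqref{eq:ineq_correction}), gives
\[
\|\boldsymbol{c}(\boldsymbol{q}_t+\Delta t\boldsymbol{u}^{\star}_t)\|\le\|\boldsymbol{c}(\boldsymbol{q}_t)\|+\tfrac12M_c\bar u^2\Delta t^2<\varepsilon_{\mathrm{tol}}+\tfrac12M_c\bar u^2\Delta t^2=O(\Delta t^2).
\]
This is where the hypothesis $\varepsilon_{\mathrm{tol}}=O(\Delta t^2)$ enters: the previous cycle's residual does not spoil the order.

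Next bound the retraction displacement by the initial residual. Write the iterates as $\boldsymbol{p}_{k+1}=\boldsymbol{p}_k-\boldsymbol{J}_c^\dagger(\boldsymbol{p}_k)\boldsymbol{c}(\boldsymbol{p}_k)$. Constant row rank gives $\boldsymbol{J}_c\boldsymbol{J}_c^\dagger=\boldsymbol{I}$, so a Taylor bound gives
\[
\|\boldsymbol{c}(\boldsymbol{p}_{k+1})\|\le\tfrac12M_c\|\boldsymbol{J}_c^\dagger\|^2\|\boldsymbol{c}(\boldsymbol{p}_k)\|^2 .
\]
Thus for small enough initial residual, which holds here once $\Delta t$ is small, the residuals decay quadratically, in particular geometrically with ratio at most $\tfrac12$. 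Summing the step lengths then gives
\[
\|\boldsymbol{\delta}_t\|\le\sum_k\|\boldsymbol{J}_c^\dagger\|\,\|\boldsymbol{c}(\boldsymbol{p}_k)\|\le2\|\boldsymbol{J}_c^\dagger\|\,\|\boldsymbol{c}(\boldsymbol{p}_0)\|=O(\Delta t^2).
\]

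Combining the two pieces gives \eqref{eq:finite_step_margin_bound} with
\[
\kappa_i=\tfrac12M_h\bar u^2+\bar L\,C .
\]
The final implication is immediate: $\gamma_i\Delta t\le1$ makes $(1-\gamma_i\Delta t)\bar h_i(\boldsymbol{q}_t)\ge0$ whenever $\bar h_i(\boldsymbol{q}_t)\ge0$.

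The main obstacle is the retraction displacement bound. The lemma assumes termination but not a convergence rate, so I must either show that the Gauss--Newton contraction holds, or add the harmless assumption that the iterates remain in the operating set. Small $\Delta t$, specifically $\Delta t$ below a threshold determined by $M_c$, $\bar u$, $\varepsilon_{\mathrm{tol}}$ and the bound on $\|\boldsymbol{J}_c^\dagger\|$, should ensure both. If that fails, the fallback is to argue that the minimum-norm structure of the pseudoinverse keeps each step proportional to the current residual, and bound the number of iterations to reach $\varepsilon_{\mathrm{tol}}$ uniformly.
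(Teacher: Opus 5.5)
Your proposal is correct and follows the paper's proof. You split the executed step into the Euler increment $\Delta t\,\boldsymbol{u}^{\star}_t$ and the retraction displacement, bound the first by a second-order Taylor expansion combined with the $i$-th CBF row satisfied by $\boldsymbol{u}^{\star}_t$, and bound the second via the $O(\Delta t^2)+O(\varepsilon_{\mathrm{tol}})$ pre-retraction residual, absorbing everything into $\kappa_i$. Your quadratic Gauss--Newton contraction only makes explicit the displacement bound that the paper asserts in one sentence; it reads ``constant row rank'' as full row rank (so that $\boldsymbol{J}_c\boldsymbol{J}_c^{\dagger}=\boldsymbol{I}$) and needs $\Delta t$ small enough that the iterates stay in the operating set, and the paper's argument relies on both implicitly as well.
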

\begin{proof}
$\|\boldsymbol{c}(\boldsymbol{q}_t)\|<\varepsilon_{\mathrm{tol}}$ and the
equality constraint enforced by \eqref{eq:exec_projection},
$\boldsymbol{J}_c(\boldsymbol{q}_t)\boldsymbol{u}^{\star}_t=\boldsymbol{0}$,
give an integrated step whose equality residual is
$O(\Delta t^{2})+O(\varepsilon_{\mathrm{tol}})=O(\Delta t^{2})$. Under the
rank and pseudoinverse bounds, the retraction displacement is of the same
order. At termination of the active-set iteration, the actuated input
satisfies every CBF half-space,
$\nabla\bar h_i(\boldsymbol{q}_t)^{T}\boldsymbol{u}^{\star}_t
\geq-\gamma_i\bar h_i(\boldsymbol{q}_t)$. A Taylor expansion of
$\bar h_i$ along the step, with the tolerance term and the retraction
displacement absorbed into $\kappa_i$, then gives
\eqref{eq:finite_step_margin_bound}. With $\gamma_i\Delta t\leq1$ and
$\bar h_i(\boldsymbol{q}_t)\geq0$, the right side is at least
$-\kappa_i\Delta t^2$.
\end{proof}
The bound makes the layer
$\{\boldsymbol{q}:\bar h_i(\boldsymbol{q})\geq-(\kappa_i/\gamma_i)\Delta t\}$
forward invariant, since $-(\kappa_i/\gamma_i)\Delta t$ is a fixed point
of the recursion \eqref{eq:finite_step_margin_bound}. A safety buffer
$h_{\mathrm{safe},i}\geq(\kappa_i/\gamma_i)\Delta t$ with
$\bar h_i(\boldsymbol{q}_0)\geq0$ therefore keeps
$h_i(\boldsymbol{q}_t)\geq0$ at every cycle. If the per-step parameter
$\bar\gamma_i=\gamma_i\Delta t\in(0,1]$ is held fixed, the invariant-layer
width is $(\kappa_i/\bar\gamma_i)\Delta t^2$, recovering the
$O(\Delta t^2)$ finite-step layer.

\section{Experiments}
\label{sec:experiments}

\subsection{Experimental Setup}
\label{subsec:experimental_setup}

In this section, we present three experiments designed to evaluate the proposed method.
Experiment~1 examines the multi-inequality projection when an aggressive
joint-tracking objective activates multiple joint-limit margins
simultaneously, and assesses the respective roles of inequality projection and
equality retraction. Experiment~2 investigates how incorporating inequality
projection into the sampled dynamics affects the generation of near-feasible
rollout trajectories in a randomized obstacle-avoidance task; and
Experiment~3 evaluates hardware applicability and reactive avoidance of a
moving obstacle on a humanoid robot.
Experiments~1 and~2 use a dual-arm Franka Emika Panda in
MuJoCo~\cite{todorov2012mujoco}, whereas Experiment~3 uses the two $7$-DoF arms
of a Unitree H1-2 humanoid. 

In every experiment, the arms rigidly grasp a tray
with both end-effectors, and the enforced equality constraint is the resulting
closed-chain condition
\begin{equation}
\boldsymbol{c}(\boldsymbol{q})
=
\begin{bmatrix}
\log\bigl(\boldsymbol{E}(\boldsymbol{q})\bigr)^{\vee}\\
\varphi_{o}\\
\vartheta_{o}
\end{bmatrix}
\in\mathbb{R}^{8},
\qquad
\boldsymbol{E}(\boldsymbol{q})
=
\boldsymbol{T}_{l}\,{}^{l}\boldsymbol{T}_{r}\,\boldsymbol{T}_{r}^{-1},
\label{eq:chain_constraint}
\end{equation}
where $\boldsymbol{T}_{l},\boldsymbol{T}_{r}\in SE(3)$ are the end-effector
poses, ${}^{l}\boldsymbol{T}_{r}$ and ${}^{l}\boldsymbol{T}_{o}$ are the
grasp-fixed relative poses of the right gripper and the tray, and
$(\varphi_{o},\vartheta_{o})$ are the roll and pitch of the tray pose
$\boldsymbol{T}_{l}\,{}^{l}\boldsymbol{T}_{o}$. The inequalities
$\boldsymbol{h}(\boldsymbol{q})$ collect the joint-limit and obstacle margins
specified in each scenario. All experiments run on an Intel Core i5-13400F
CPU with 16~GB of RAM and an NVIDIA GeForce RTX 4060 Ti GPU (8~GB VRAM).

PR-MPPI is configured identically in all experiments: $K=1000$ rollouts over a
horizon of $T=30$ steps with $\Delta t=1/30$~$s$, and an isotropic
exploration covariance
$\boldsymbol{\Sigma}_u=\sigma^2\boldsymbol{I}$ with $\sigma=0.03$~$rad/s$ on the
commanded joint velocities, and every inequality uses the class-$\mathcal{K}$
gain $\gamma_i=5$. PR-MPPI runs at $30$~Hz, and its retracted joint command is tracked by a
computed-torque law at the $500$~Hz state rate. The task costs, their weights,
and the detailed parameters for each experiment are provided on the project page,
\url{https://rcilab.github.io/prmppi}.

\subsection{Experiment 1: Joint-Limit Stress Test}
\label{subsec:exp1}

\begin{table*}[t]
\centering
\scriptsize
\begin{threeparttable}
\caption{Joint-limit violations, measured constraint residuals, and MPPI-inner
computation times over the $0$--$6$~$s$ interval.}
\label{tab:exp1_constraint}
\setlength{\tabcolsep}{2pt}
\begin{tabularx}{\textwidth}{lYYYYYY}
\toprule
Method
& \shortstack{Chain trans.\\($m$)}
& \shortstack{Chain rot.\\($rad$)}
& \shortstack{Tilt\\($rad$)}
& \shortstack{Max. bound\\viol. ($rad$)}
& \shortstack{Max. margin\\pen. ($rad$)}
& \shortstack{Compute time\\($ms$)} \\
\midrule
DQ-MPPI~\cite{zhu2026real}
& $0.004\!\pm\!0.000$
& $0.004\!\pm\!0.001$
& $0.061\!\pm\!0.047$
& $0.082$
& $0.132$
& $238.948\!\pm\!19.473$ \\
\addlinespace[2pt]
\textbf{PR-MPPI Full}\tnote{a}
& $0.003\!\pm\!0.000$
& $0.003\!\pm\!0.000$
& $0.002\!\pm\!0.000$
& $0.000$
& $0.003$
& $19.351\!\pm\!1.216$ \\
\addlinespace[2pt]
PR-MPPI w/o retraction
& $0.007\!\pm\!0.002$
& $0.008\!\pm\!0.003$
& $0.011\!\pm\!0.004$
& $0.000$
& $<0.001$
& $18.288\!\pm\!0.514$ \\
\addlinespace[2pt]
PR-MPPI w/o inequality\tnote{a}
& $0.003\!\pm\!0.000$
& $0.003\!\pm\!0.000$
& $0.002\!\pm\!0.000$
& $0.022$
& $0.072$
& $8.975\!\pm\!0.092$ \\
\bottomrule
\end{tabularx}
\begin{tablenotes}
\scriptsize
\item[a] The returned command satisfies every equality channel to below
$10^{-9}$ throughout. The table entries are measured-state residuals.
\end{tablenotes}
\end{threeparttable}
\end{table*}

\begin{figure}[t]
\centering
\subfloat[]{\includegraphics[width=0.98\columnwidth]{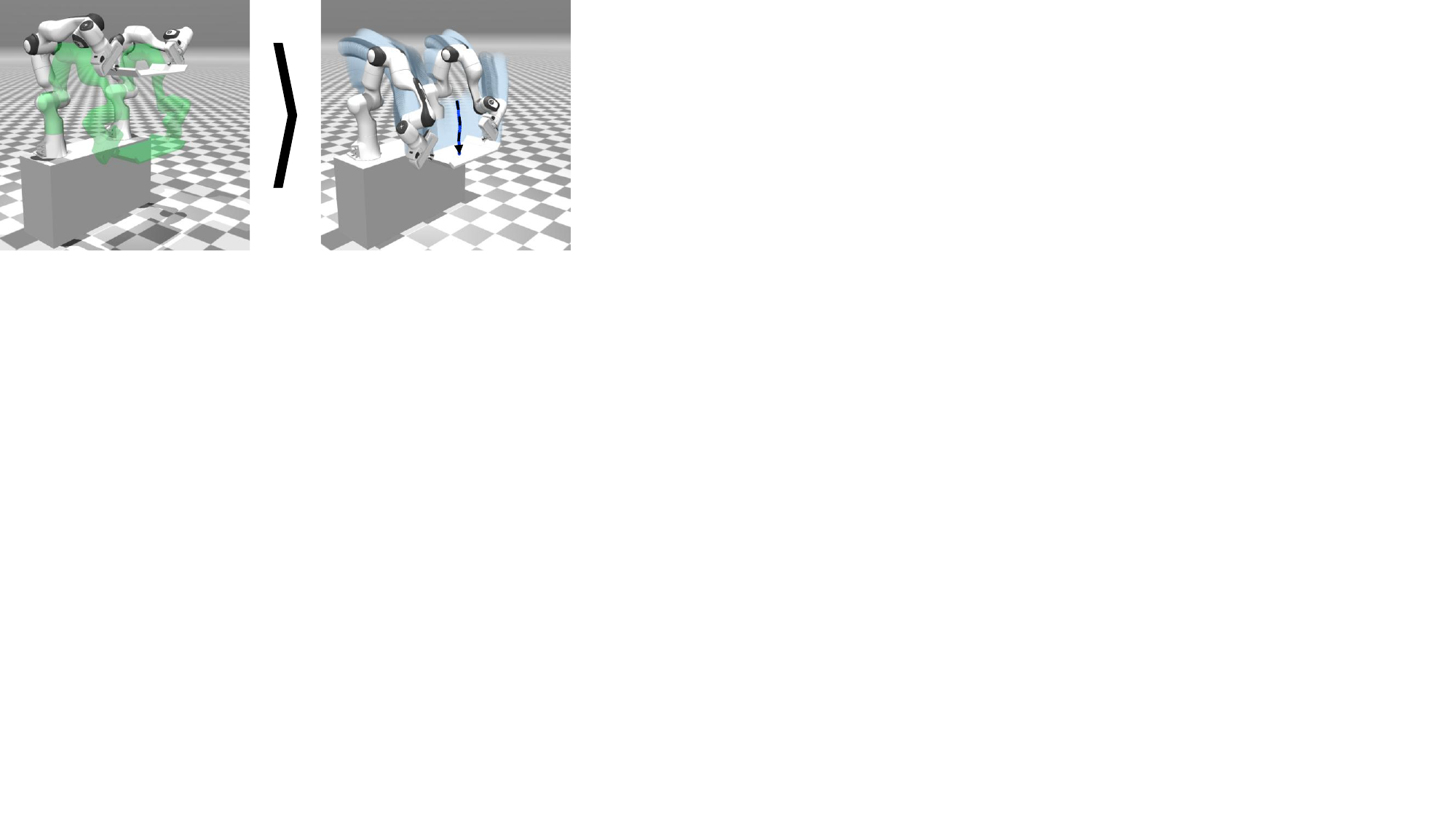}
\label{fig:exp1_motion}}\par
\vspace{-0.5pt}
\subfloat[]{\includegraphics[width=0.98\columnwidth]{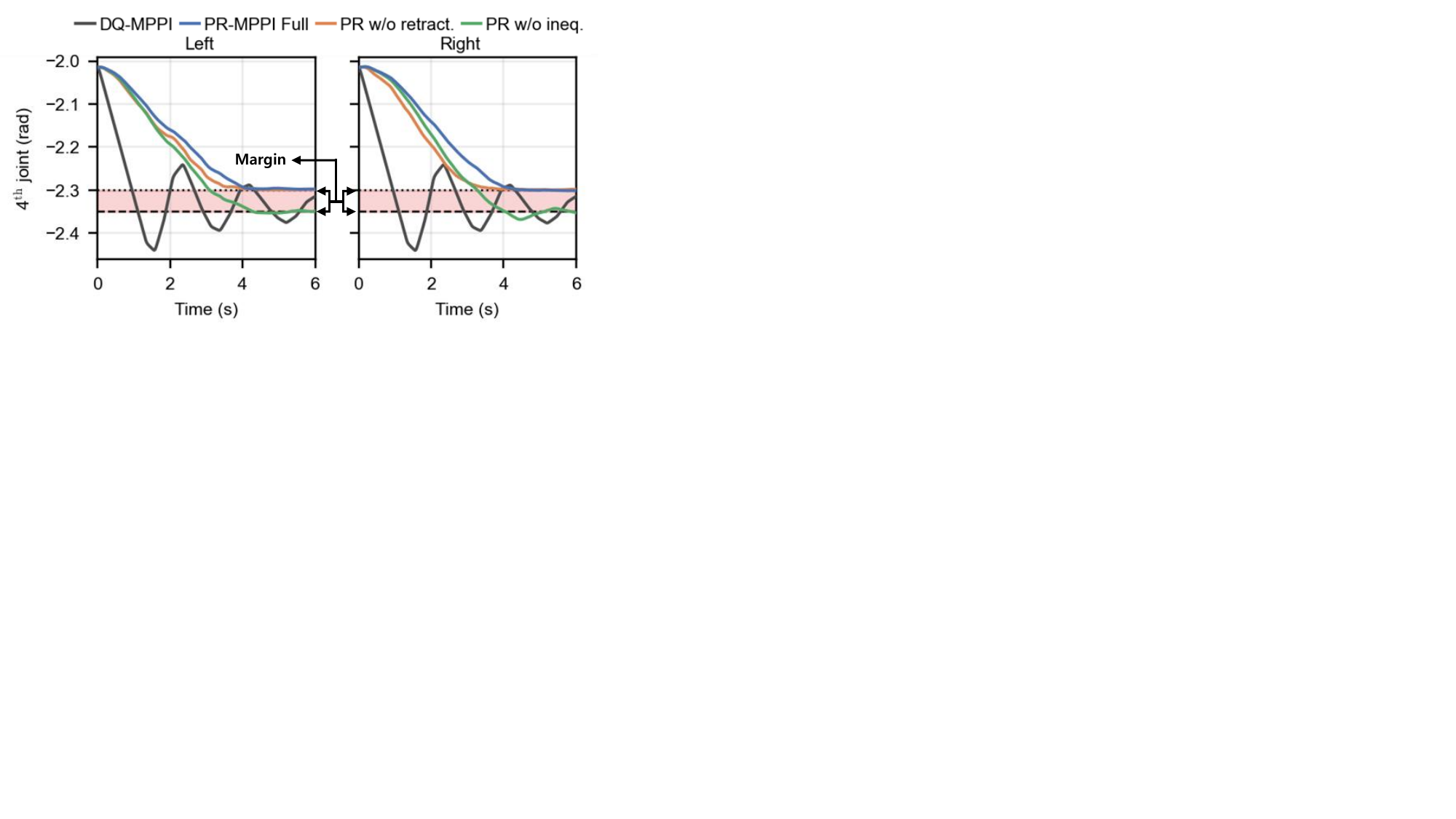}
\label{fig:exp1_joint_trajectories}}
\caption{Joint-limit stress test. \textbf{(a)}~Dual-arm tray-lowering motion:
the goal joint configuration in green and the trajectory executed by Full
PR-MPPI in blue. \textbf{(b)}~Measured fourth-joint trajectories of both arms for the
four controllers. The dashed line is the artificial lower bound
$q_{4,\min}=-2.35$~$rad$ and the dotted line the shifted guard boundary at
$-2.30$~$rad$; the shaded band between them is the $0.05$~$rad$ CBF margin.}
\label{fig:exp1_joint_limits}
\end{figure}

The system is initialized at a configuration corresponding to a level tray at
$(0.45,0,1.0)$~$m$ and tracks an IK solution for
$(0.45,0,0.55)$~$m$. Both arms start from $q_4=-2.01$~$rad$, whereas the IK
target is $q_{4,g}=-2.41$~$rad$. We impose the artificial lower bound
$q_{4,\min}=-2.35$~$rad$ and a $0.05$~$rad$ CBF safety margin, which shifts the
guard boundary to $-2.30$~$rad$. Because the target lies $0.06$~$rad$ beyond
the bound in both arms, the two joint-limit constraints become active nearly
simultaneously.

We compare Full PR-MPPI with DQ-MPPI and two ablations: PR-MPPI without
retraction, which bypasses the final equality retraction, and PR-MPPI without
inequality projection, which disables joint-limit projection in both the
sampled dynamics and the execution layer. In this experiment, the
joint-tracking cost weight was set to ten times the joint-limit violation
penalty weight. No joint-limit violation penalty was applied to Full PR-MPPI
or PR-MPPI without retraction.

Figure~\ref{fig:exp1_joint_limits} and
Table~\ref{tab:exp1_constraint} summarize the outcome. Because the tracking
objective aggressively drives both fourth joints toward the bound, a soft
joint-limit penalty can be traded against a sufficiently aggressive task
objective. A cost-based treatment alone therefore cannot guarantee the
limit, and the measurements confirm it: PR-MPPI without inequality
projection and DQ-MPPI penetrate the bound by $0.022$ and $0.082$~$rad$,
respectively. Full PR-MPPI uses no joint-limit violation penalty and instead
removes the bound-violating component from every sampled and executed
velocity: both joints settle onto the guard boundary and slide along it, the
bound is never crossed, and the residual $0.003$~$rad$ margin penetration is
absorbed by the $0.05$~$rad$ safety shift.
Lemma 1 characterizes the remaining finite-step
command-space margin deviation as $O(\Delta t^2)$.
The $0.05$~$rad$ safety shift is selected conservatively to
absorb this deviation and tracking errors; the absence of a
measured-state bound crossing remains an empirical result.

The measured equality residuals in Table~\ref{tab:exp1_constraint} are
similar across the controllers. The chain channels of all four methods remain
at the $10^{-3}$ level because the two methods without retraction still take
a null-space projection step at every cycle. DQ-MPPI uses its low-level
null-space servo, whereas PR-MPPI without retraction uses the execution-layer
projection \eqref{eq:exec_projection}. The commanded control inputs,
however, show that the retraction step is required for command-space equality
satisfaction to numerical tolerance. The command of Full PR-MPPI satisfies every channel of
\eqref{eq:chain_constraint} to within
$(0.170\!\pm\!0.070)\!\times\!10^{-10}$~$m$ in chain translation,
$(0.280\!\pm\!0.020)\!\times\!10^{-10}$~$rad$ in chain rotation, and
$(0.055\!\pm\!0.052)\!\times\!10^{-10}$~$rad$ in tilt. These values are the
numerical floor of the projection and retraction pipeline, while the measured
residuals reflect low-level tracking error. For PR-MPPI without retraction and
DQ-MPPI, the commanded residuals coincide with the measured ones because
finite-step drift from tangent-space integration remains in the command and
cannot be removed by the low-level controller. DQ-MPPI additionally handles
tray tilt only as a soft cost, resulting in a larger mean tilt residual
($0.061$ versus $0.002$~$rad$). 

As shown in Table~\ref{tab:exp1_constraint}, the computational differences
reflect the structure of each method. PR-MPPI applies the inequality
projection at every step of every sampled rollout, so its cost increases
with the number of inequality constraints, while the retraction is applied
only once to the selected command and contributes little overhead. DQ-MPPI,
evaluated with the authors' official
implementation\footnote{Available at \url{https://dq-gpu-mpc.github.io/}.},
performs two MPPI stages together with a clustering step. On the same
hardware and at the same sampling budget, its planning update takes
$238.948$~$ms$ against $19.351$~$ms$ for Full PR-MPPI. Both methods run their
low-level null-space controller at $500$~Hz between planning updates; the
difference lies in the replanning rate, about $4.2$~Hz for DQ-MPPI versus
the sustained $30$~Hz of PR-MPPI.

\subsection{Experiment 2: Randomized Obstacle Avoidance}
\label{subsec:exp2}
\begin{table}[t]
\centering
\scriptsize
\setlength{\tabcolsep}{2pt}
\caption{Outcome counts over $30$ randomized obstacle placements.}
\label{tab:exp2_randomized_obstacle}
\begin{tabularx}{\columnwidth}{lYYYY}
\toprule
Method
& \shortstack{Success/\\Total}
& Stuck
& \shortstack{Tray\\drop}
& Collision \\
\midrule
MC-MPPI~\cite{lee2026mcmppi}
& $13/30$
& $17$
& $0$
& $0$ \\
DQ-MPPI~\cite{zhu2026real}
& $18/30$
& $1$
& $11$
& $0$ \\
\textbf{PR-MPPI Full}
& $30/30$
& $0$
& $0$
& $0$ \\
\shortstack[l]{PR-MPPI Exec.-Only Ineq.}
& $29/30$
& $1$
& $0$
& $0$ \\
\bottomrule
\end{tabularx}
\end{table}

\begin{figure}[t]
\centering
\includegraphics[width=\columnwidth]{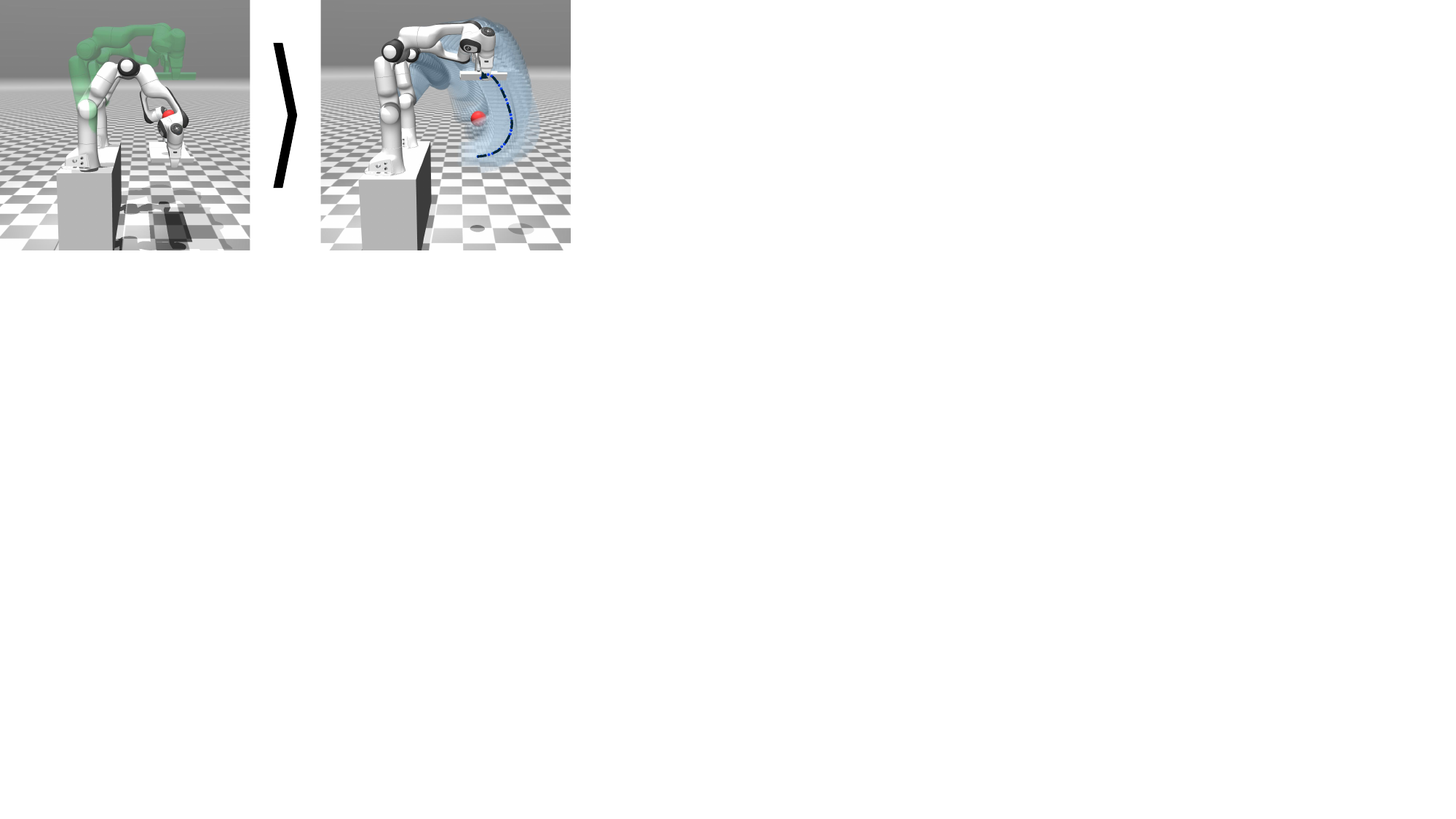}
\caption{Randomized obstacle-avoidance task at the central lateral offset. The
tray is transported past the spherical obstacle (red) under the closed-chain
grasp; green marks the goal tray pose and blue a representative collision-free
detour by Full PR-MPPI, with its dashed tray-center path.}
\label{fig:exp2_obstacle_task}
\end{figure}

The level tray moves from $(0.5,0,0.5)$~$m$ to
$(0.5,0,1.0)$~$m$ past a spherical obstacle of radius $0.05$~$m$ centered at
$(x_i,y_i,0.75)$~$m$, where $x_i$ and the lateral offset $y_i$ are drawn at
random within $[0.40,0.60]$~$m$ and $[-0.10,0.10]$~$m$, respectively, for
each of the $30$ paired trials and shared by all controllers, so that each
must find a lateral detour around the blocked straight path in
Fig.~\ref{fig:exp2_obstacle_task}. All methods use task-space tracking. PR-MPPI takes $h(\boldsymbol{q})$ as
the tray--obstacle distance with safety margin
$h_{\mathrm{safe}}=0.02$~$m$, whereas DQ-MPPI and
MC-MPPI~\cite{lee2026mcmppi} rely on a soft collision cost alone. Full
PR-MPPI handles the obstacle through the projection only, with no collision
penalty in its cost. An execution-only ablation of PR-MPPI disables the
inequality projection inside the sampled dynamics while keeping the
execution filtering and the retraction, and instead carries the same
squared penetration penalty as the baselines; all other settings are
identical.

All controllers run for $120$~$s$ against the same outcome geometry: a
collision is declared when the tray contacts the obstacle sphere, which exerts
no physical force, and a run succeeds when the tray pose error
$e=\|\log(\boldsymbol{T}^{-1}\boldsymbol{T}_{g})\|_2$, which stacks the
translational ($m$) and rotational ($rad$) components of the relative twist,
stays below $0.02$ \footnote{Matching the criterion of~\cite{zhu2026real}, which thresholds
the dual-quaternion pose-error norm---approximately $e/2$ for small
errors---at $0.01$.} for $1$~$s$; failed runs are classified by their first
failure event as collision, tray drop, or stuck.

Table~\ref{tab:exp2_randomized_obstacle} summarizes the outcomes, which
separate primarily by how each method handles the obstacle inequality.
MC-MPPI completes $13$ of $30$ trials, while all $17$ unsuccessful trials are
classified as stuck: no collision or tray drop occurs, but the controller does
not attain the goal dwell within the horizon. The stuck behavior is most
evident when the obstacle lies near the center of the nominal path, around
$(0.5,0,0.75)$~$m$. Because the tray is wide relative to the available
clearance, a large fraction of the sampled trajectories then intersect the
obstacle, leaving too few informative feasible samples that pass around either
side. The cost-only update consequently fails to establish a consistent
lateral detour, and the motion stalls near the obstacle.

DQ-MPPI employs a two-stage MPPI scheme whose first stage performs
high-variance exploration. Under the common sampling budget, it also has the
longer update time reported in Table~\ref{tab:exp1_constraint}, so its
closed-loop behavior reflects both the two-stage update and its attainable
replanning rate. In the trials, DQ-MPPI produces more abrupt commanded motion
and longer executed paths. On the same $18$ placements where
DQ-MPPI succeeds, its mean path length is $1.338$~$m$, compared with
$0.686$~$m$ for Full PR-MPPI. The larger step-to-step motion also magnifies the finite-step
error of the first-order constraint correction, increasing the closed-chain
residuals. Indeed, the tray-drop trials exhibit larger mean chain translation,
chain rotation, and tilt residuals than the successful trials: the respective
values increase from
$0.0111$~$m$, $0.0105$~$rad$, and $0.0239$~$rad$ to $0.0159$~$m$,
$0.0181$~$rad$, and $0.0301$~$rad$. The resulting loss of closed-chain
consistency can degrade tray support; $11$ of the $12$ failures are tray
drops, while the remaining failure is stuck.

The execution-only ablation separates the two roles of the projection.
With the execution filtering kept, the constraints themselves hold:
whatever the update proposes, the actuated command is corrected on the
states it visits, and no collision or tray drop occurs in any trial. The
plan, however, is still evaluated on rollouts that violate the obstacle
margin, and the cost of this appears in how the search behaves. When the
obstacle blocks the center of the nominal path, few informative samples
pass around either side, so the search takes longer or stalls. The
ablation reaches the goal dwell in $18.217\pm10.183$~$s$ against
$17.951\pm5.046$~$s$ for Full PR-MPPI, with twice the spread, and the
same effect appears as path variability: it travels $0.757\pm0.107$~$m$
against $0.636\pm0.040$~$m$, with the largest detours at placements close
to the nominal path. Its single failure, out of $30$ trials, is stuck at
exactly such a placement. Full PR-MPPI projects every sampled input inside
the rollouts, so the trajectory cost is evaluated on motions that respect
the margin and the update averages over rollouts that already pass the
obstacle. It runs with the tray--obstacle penalty disabled entirely and
completes every trial.

\begin{figure*}[t]
\centering
\includegraphics[width=\textwidth]{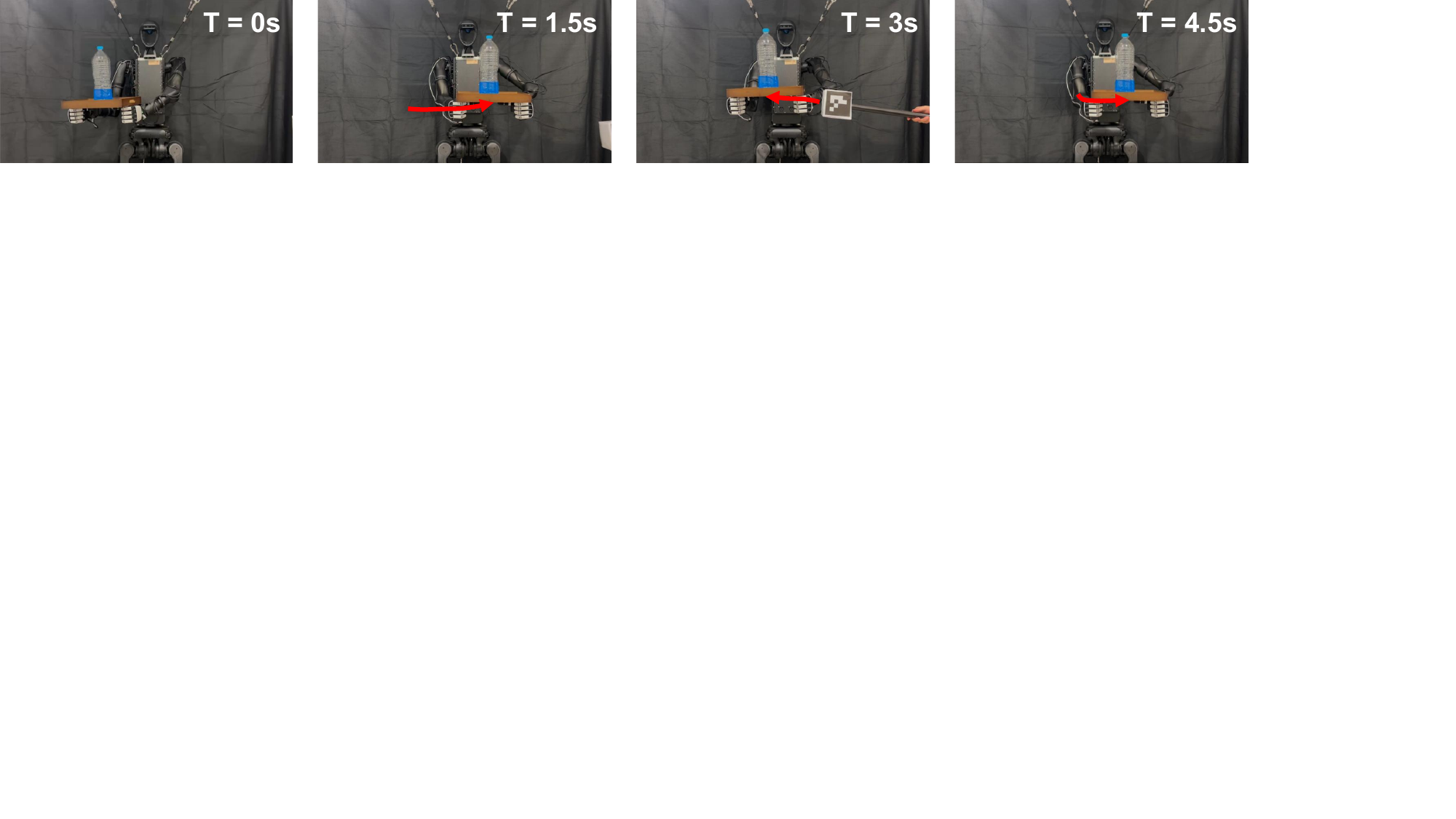}
\caption{Hardware demonstration on the Unitree H1-2. The two arms hold a tray
carrying a payload under the closed-chain grasp while a hand-held obstacle is
moved toward it; each frame is labelled with the elapsed time.}
\label{fig:exp3_h12_dynamic_obstacle}
\end{figure*}
\subsection{Experiment 3: Reactive Avoidance of a Moving Obstacle on Hardware}
\label{subsec:exp3}

The two $7$-DoF arms of a Unitree H1-2 humanoid hold a loaded tray under the
closed-chain grasp while a person advances a hand-held obstacle toward it from
varying directions. The current obstacle position is supplied to the controller
at each control cycle and held fixed over the rollout horizon, so the controller
replans reactively without predicting obstacle motion. The same
sampling--projection--retraction pipeline used in simulation is deployed
online with only the platform's kinematic model substituted. As shown in
Fig.~\ref{fig:exp3_h12_dynamic_obstacle}, the arms reshape their configuration
within the equality tangent subspace to open clearance, while the retraction
returns grasp-consistent commands and the measured tray remains level. This
experiment demonstrates online hardware deployment and reactive avoidance.

A limitation nevertheless emerged when the obstacle advanced along a
direction whose clearance-increasing motion the closed-chain equality
forbids. Near such configurations the projected gradient of the margin,
$\boldsymbol{N}\nabla\bar h$, nearly vanishes, and no admissible input can
increase clearance without violating the grasp. The failure is therefore
structural rather than an artifact of control rate or latency: it is the
inherent price of satisfying the commanded equality to numerical tolerance,
since the very projection that guarantees the grasp also excludes the
retreating motion.

\section{Conclusion}
\label{sec:conclusion}
We presented Projection-Retraction MPPI, a sampling-based MPC that enforces
equality and inequality constraints inside its rollouts through one nested
velocity-space projection, with a retraction that satisfies the commanded
equality to numerical tolerance independently of the task weighting. On a
$14$-DoF dual-arm system, the returned commands reached this tolerance while
the measured trajectories retained the true inequality margins under
aggressive task costs. The same pipeline was deployed on the arms of a Unitree
H1-2 humanoid for reactive avoidance of a moving obstacle. Approaches whose
retreat direction the closed chain forbids exposed the structural cost of
command-space equality enforcement.

The proposed formulation operates at the kinematic level: the projection and
retraction act on configurations and velocities, and a low-level controller
tracks the retracted command. The structure itself, however, is not tied to
this choice. Sampled torques can equally be projected onto the dynamically
consistent tangent space of the same constraint manifold, with torque limits
and contact-force margins entering as inequalities of the same nested form,
which would place the exactness guarantee at the actuation layer. This
torque-level extension, together with the selective relaxation of the grasp
equality when the admissible tangent motion is insufficient, is left to
future work.

\bibliographystyle{IEEEtran}
\bibliography{IEEEabrv, references, ours}

\end{document}